\documentclass{article}

\usepackage{microtype}
\usepackage{graphicx}
\usepackage{subcaption}
\usepackage{booktabs} 

\usepackage{hyperref}
\usepackage{enumitem}

\PassOptionsToPackage{sort}{natbib}
\usepackage[accepted]{icml2026}

\usepackage{amsmath}
\usepackage{amssymb}
\usepackage{mathtools}
\usepackage{amsthm}
\usepackage{multirow}

\usepackage[table]{xcolor}
\usepackage{colortbl}
\usepackage{booktabs}

\usepackage[capitalize,noabbrev]{cleveref}

\theoremstyle{plain}
\newtheorem{theorem}{Theorem}[section]
\newtheorem{proposition}[theorem]{Proposition}
\newtheorem{lemma}[theorem]{Lemma}
\newtheorem{corollary}[theorem]{Corollary}
\theoremstyle{definition}

\theoremstyle{remark}
\newtheorem{remark}[theorem]{Remark}

\usepackage{algorithm}
\usepackage{algorithmic}

\newcommand{\method}{S2L-PO}

\newcommand{\ie}{\textit{i.e.}}
\newcommand{\eg}{\textit{e.g.}}

\usepackage[textsize=tiny]{todonotes}

\icmltitlerunning{Smaller Models are Natural Explorers for Policy-Level Diversity in GRPO}

\begin{document}

\twocolumn[

\icmltitle{Smaller Models are Natural Explorers for Policy-Level Diversity in GRPO}



\icmlsetsymbol{equal}{*}

\begin{icmlauthorlist}
  \icmlauthor{Yiran Xu}{equal,thu}
  \icmlauthor{Yiming Ren}{equal,thu,shailab}
  \icmlauthor{Zicheng Lin}{equal,thu}
  \icmlauthor{Chufan Shi}{thu} 
  \icmlauthor{Yukang Chen}{cuhk} \\
  \icmlauthor{Dingdong Wang}{cuhk}
  \icmlauthor{Tianhe Wu}{cihk}
  \icmlauthor{Junjie Wang}{thu}
  \icmlauthor{Yujiu Yang}{thu}
  \icmlauthor{Yu Qiao}{shailab}
  \icmlauthor{Ruihang Chu}{thu}
\end{icmlauthorlist}

\icmlaffiliation{thu}{Tsinghua University}
\icmlaffiliation{shailab}{Shanghai AI Laboratory}
\icmlaffiliation{cuhk}{The Chinese University of Hong Kong}
\icmlaffiliation{cihk}{City University of Hong Kong}

\icmlcorrespondingauthor{Yu Qiao}{qiaoyu@pjlab.org.cn}
\icmlcorrespondingauthor{Ruihang Chu}{ruihangchu@mail.tsinghua.edu.cn}

  \icmlkeywords{Machine Learning, ICML}

  \vskip 0.3in
]



\printAffiliationsAndNotice{}  

\begin{figure*}[t]
    \centering
\includegraphics[width=0.9\textwidth]{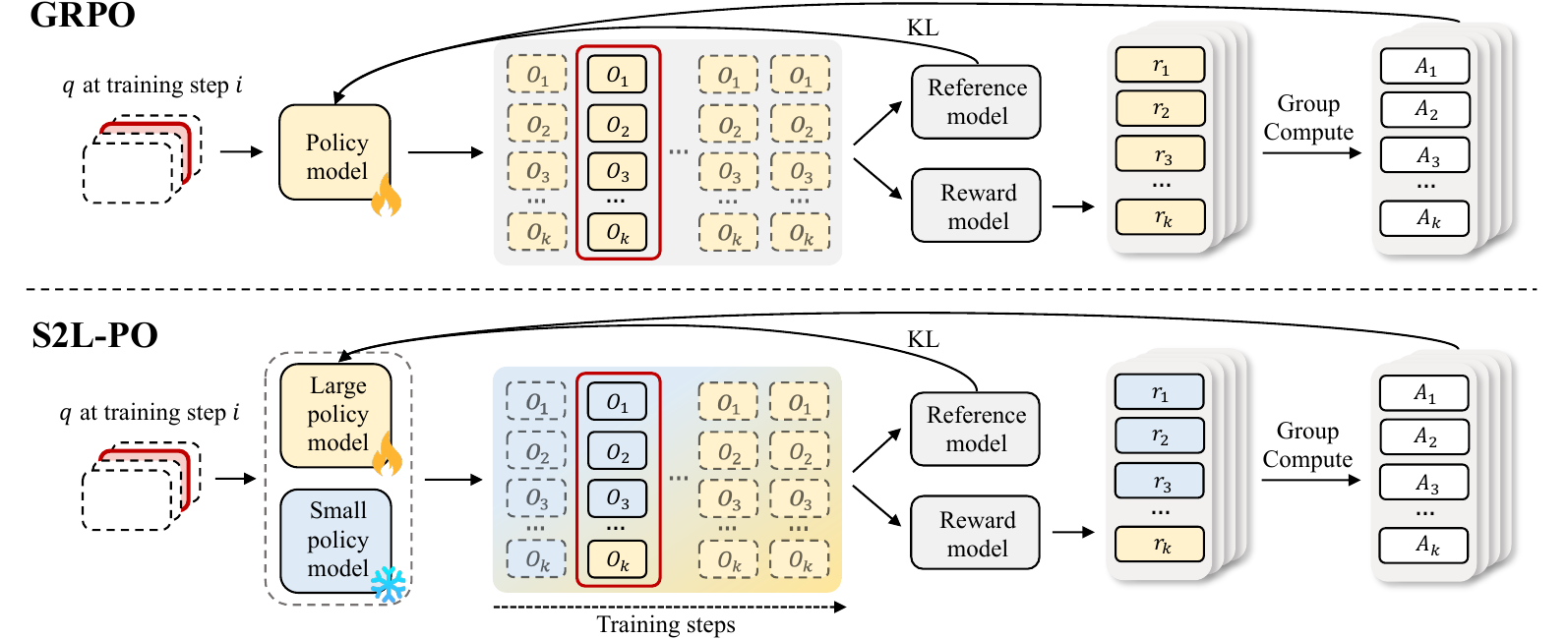}
    \caption{\method~(\textbf{Bottom}) simply modifies the rollout generation process of standard GRPO (\textbf{Top}). Motivated by the observation that smaller models inherently exhibit higher policy-level diversity, \method~leverages a frozen smaller policy model to sample diverse rollouts for training a larger model. In early training, rollouts are primarily sampled from the smaller model to encourage diverse exploration.
    As training progresses, sampling smoothly transitions through a mixture of smaller and larger models, and ultimately recovers standard on-policy GRPO to balance exploration and exploitation.
    }
    \label{fig:method_w2spo}
    \vspace{-1.2em}
\end{figure*}

\begin{abstract}
We identify a new dimension for enhancing rollout diversity in Group Relative Policy
Optimization (GRPO) for LLMs.
While GRPO relies on diverse rollouts, prevailing strategies primarily increase diversity by injecting more token-level randomness, which may introduce step-wise noise and lead to incoherent trajectories. 
We uncover that smaller models within the same model family inherently exhibit higher policy-level diversity, indicated by their superior pass@k relative to larger counterparts as sample counts increase.
Unlike token-level noise, this diversity is temporally correlated, preserves logical consistency, and provides structured exploration signals for gradient estimation. We thus propose \method~(Small-to-Large Policy Optimization), a framework that leverages fixed small models as natural explorers to train larger models. To balance exploration and exploitation, we design a progressive annealing strategy that transitions from offline small-model rollouts to the large learner’s own sampling. This shift elegantly avoids mid-training performance drops caused by the small model's capacity limits, achieving faster convergence and unlocking a higher performance ceiling. \method~improves accuracy on diverse mathematical reasoning benchmarks (\eg, +8.8\% on AIME 24 using a 1.7B explorer to guide the 8B model) while reducing rollout compute.


\end{abstract}

\section{Introduction}

\begin{figure}[h!]
    \centering
    \includegraphics[width=1.01\linewidth]{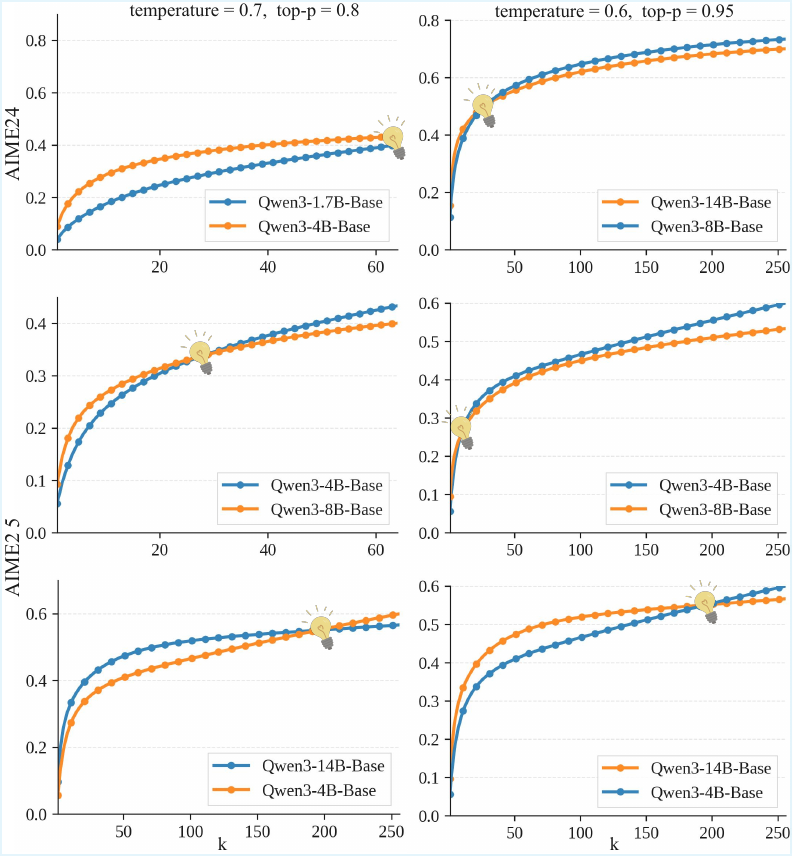}
    \caption{Pass@k curves on AIME24 and AIME25 for Qwen3 Base models of various scales.
    While larger models perform better at small $k$,
    smaller models continue to improve as $k$ increases and can match or exceed
    larger models under large sampling size.
    }
    \label{fig:pre_exp_passk}
\end{figure}

Reinforcement learning with verifiable rewards (RLVR) has emerged as a powerful paradigm for improving the reasoning capabilities of large language models~\citep{guo2025deepseek,hong2024orpo,wang2025reinforcement}. Group Relative Policy Optimization (GRPO)~\citep{shao2024deepseekmath}, in particular, has gained widespread adoption due to its simplicity and effectiveness: it samples multiple candidate solutions per prompt, computes group-relative advantages, and updates the policy without requiring a separate critic network. A key factor in GRPO's success is the diversity of sampled rollouts. When candidates within a group are too homogeneous, the advantage signals collapse and learning stagnates~\citep{gu2025gapo,zhang2025edge,wang2025reinforcement}.

Prevailing strategies for increasing rollout diversity primarily operate on the token level. A common approach is temperature scaling, which raises the original sampling temperature to inject more randomness into individual token selection. Yet, high-temperature sampling can trigger entropy explosion~\cite{zhuang2025exploring, yang2025let, wang2025beyond, nguyen2024turning,shi2024thorough}, where the policy explores indiscriminately across all tokens, leading to training instability and degraded reasoning performance. More critically, because elevated temperature adds randomness independently at each decoding step, small deviations compound over long reasoning chains, making it difficult to maintain a consistent logical flow.
Such resulting rollouts may exhibit high surface diversity in terms of token entropy but often suffer from low behavioral coherence.
Ultimately, this approach is less effective at providing the structured exploration signals that GRPO requires.
While other works explore curating diverse response sets to improve training signals or rewarding intra-group diversity~\cite{anschel2025group, chen2025dra}, these strategies involve data engineering and extra computational overhead, limiting their scalability to new tasks without significant costs.

We present an empirical finding to explore an alternative dimension for enhancing diversity.
When comparing models of different sizes on mathematical reasoning benchmarks, we observe a surprising pattern:
while larger models outperform their smaller counterparts at pass@1, this gap shrinks and can even reverse as $k$ increases (see Fig.~\ref{fig:pre_exp_passk}).
For instance, a 4B model surpasses an 8B model in pass@$k$ once $k \ge 32$, and it can also outperform a 14B model when the sample budget is sufficiently large (\eg, $k \approx 200$). As smaller models have a lower performance floor, their competitiveness or advantage at higher sample counts suggests that they possess an inherent diversity, stemming not from token-level randomness but from more varied solution strategies~\cite{bansal2024smaller,yue2025does,dragoi2025beyond}.

We characterize this phenomenon as a form of policy-level diversity.
Smaller models typically undergo distillation from larger models within the same family, ensuring distributional alignment while reducing parameter count.
This parameter-level compression induces a structured shift in the policy's inductive bias.
Unlike token-level diversity that perturbs the action distribution through step-wise noise along a reasoning trajectory, parameter-level compression applies a time-invariant perturbation to shift the entire policy.
As analyzed in Sec.~\ref{sec:theory}, this preserves temporal correlation and enhances internal consistency. It further prevents gradient dilution by focusing exploration on structured reasoning strategies rather than uncoordinated local flips, providing more informative updates for the learner.
We summarize this distinction as follows: \emph{token-level randomness perturbs the action; parameter-level compression perturbs the policy}.

Building on this insight, we propose \textsc{S2L-PO} (Small-to-Large Policy Optimization), a framework that leverages weaker small models as natural explorers to generate rollouts for training stronger large models (see Fig.~\ref{fig:method_w2spo}). Since the small model can provide superior policy diversity per compute unit, we fix its parameters to generate rollouts offline.
This setting avoids the instability of early-stage on-policy updates caused by mismatched model capacities and enables highly efficient parallelization of rollout generation.
To balance exploration and exploitation, we design a progressive annealing strategy that transitions from small-model exploration to on-policy learning. Initially, the small model provides the entire or the majority of rollouts to maintain diverse exploration and prevent mode collapse.
As training progresses, we gradually shift the sampling role to the large model to mitigate the distribution mismatch between the small sampler and the large learner.
This approach effectively prevents mid-training performance degradation and ultimately achieves a higher ceiling.
Since \method~only modifies the rollout process, it remains seamlessly compatible with existing GRPO implementations.

We comprehensively evaluate our approach across two model families (Qwen3 and InternLM2.5) on four mathematical reasoning benchmarks (AIME24, AIME25, MATH-500, and OlympiadBench).
Across various settings, small-to-large policy sampling consistently improves both final performance and sample efficiency over standard GRPO, reaching stronger Pass@1 with fewer effective training steps (\eg, using a 1.7B explorer to guide an 8B model yields an average gain of about 9\%). On an out-of-domain benchmark (CommonsenseQA), our method matches or marginally improves over GRPO, suggesting that the benefits do not come at the expense of generalization.
Code is available at \url{https://github.com/qishisuren123/S2L-PO}.

\section{Preliminary}

\subsection{Group Relative Policy Optimization (GRPO)}
Group Relative Policy Optimization (GRPO) \citep{shao2024deepseekmath} is an on-policy policy-gradient method tailored to RLVR settings, where supervision is provided by \emph{verifiable} rewards (\eg, rule-based correctness checks). GRPO optimizes a policy model $\pi_\theta$ without training an explicit value function (critic). Instead, it estimates advantages via within-group relative comparisons among multiple samples generated for the same query, which reduces both compute and engineering overhead.

Formally, for each query $q\sim\mathcal{D}$, GRPO samples a group of $k$ candidate outputs
$O=\{o_1,o_2,\dots,o_k\}$ from the behavior policy $\pi_{\theta_{\text{rollout}}}$ and evaluates each output with a scalar reward $r(o_i)$. It then computes a \emph{group-relative advantage} by standardizing rewards within the sampled group:
\begin{equation}
A_i=\frac{r(o_i)-\mathrm{mean}(\{r(o_j)\}_{j=1}^{k})}{\mathrm{std}(\{r(o_j)\}_{j=1}^{k})+\epsilon_{\text{adv}}}.
\end{equation}
Let $\rho_i=\frac{\pi_\theta(o_i\mid q)}{\pi_{\theta_{\text{rollout}}}(o_i\mid q)}$ denote the importance sampling ratio. GRPO uses a PPO-style clipped surrogate objective with KL regularization toward a reference policy $\pi_{\text{ref}}$. We optimize $\mathcal{J}_{\text{GRPO}}(\theta)$ defined as:
\begin{align}
\mathbb{E}
\Bigg[
\frac{1}{k}\sum_{i=1}^{k}
\min\!\Big(\rho_iA_i,\,
\mathrm{clip}(\rho_i,1-\epsilon_{\text{clip}},1+\epsilon_{\text{clip}})A_i\Big)
\notag\\
-\beta\,\mathbb{D}_{\mathrm{KL}}(\pi_\theta\,\|\,\pi_{\text{ref}})
\Bigg].
\label{eq:grpo_obj}
\end{align}

Since GRPO relies on within-group relative rewards, its gradient quality is sensitive to the diversity of sampled candidates. When candidates are homogeneous, advantage signals vanish and learning stagnates. The standard remedy, temperature scaling, injects token-level noise that is temporally independent, often yielding locally random but globally incoherent trajectories. This motivates our exploration of policy-level perturbations as an alternative source of structured diversity.

\subsection{Distillation Introduces Perturbations}
\label{sec:compression_setting}

In this work, we study the parameter-count compression within a single model family and its implications for exploration in RL-style fine-tuning.
Rather than viewing compression purely as an efficiency tool, we interpret the compression-and-distillation process as inducing a policy-level perturbation~\cite{park2025subset,peng2025enhancing,hinton2015distilling,gu2023minillm}.
Although compression is often motivated by deployment constraints (\eg, memory, latency, and serving cost), the student is typically optimized to retain the teacher's task behavior under a reduced parameter budget.
As a result, the teacher-to-student mapping is not arbitrary: it induces a structured shift in inductive biases and decision boundaries.
We leverage this property and treat the compressed student as a coherent deviation from its teacher in policy space, providing a source of exploration diversity.

Take Qwen3 dense series~\cite{yang2025qwen3} as example, which represents a currently strong and widely adopted base model family.
For models at or below 14B parameters, they are obtained via a unified larger-to-smaller distillation in final stages.
As reported, each model is trained as the student to align its logits to those of a larger teacher (\eg, Qwen3-32B or Qwen3-235B-A22B) by minimizing a KL-divergence objective during on-policy distillation.
This yields a controlled compression setting: students across scales share a consistent distillation procedure and teacher family, ensuring behavioral proximity while allowing capacity reduction to induce meaningful, structured deviations.

Formally, let $\pi_{\theta^\star}$ denote the teacher policy and $\{\pi_{\theta_k}\}_{k\in\mathcal{K}}$ denote student policies at different parameter scales within the same series, where
$\mathcal{K}=\{1.7\mathrm{B}, 4\mathrm{B}, 8\mathrm{B}, 14\mathrm{B}\}$.
Since $\pi_{\theta_k}$ is trained to approximate $\pi_{\theta^\star}$ under distillation, we model compression as an effective perturbation in parameter space, formulated as
\begin{equation}
\theta_k \approx \theta^\star + \delta_{\theta,k},
\end{equation}
where $\delta_{\theta,k}$ captures the structured change induced by compression and distillation.
Equivalently, this corresponds to a controlled deviation in policy space:
\begin{equation}
\pi_{\theta_k}(\cdot \mid q) \approx \pi_{\theta^\star}(\cdot \mid q) + \Delta_{\pi,k}(\cdot \mid q),
\end{equation}
where $\Delta_{\pi,k}$ is a \emph{coherent} shift arising from reduced capacity, rather than a token-level random perturbation.
In this paper, we validate the perturbation view on both Qwen3~\cite{yang2025qwen3} and InternLM2.5~\cite{cai2024internlm2} families.

\section{Method}
Given that compression induces structured, temporally consistent perturbations, we first analyze why policy-level perturbations yield new kinds of exploration signals compared to token-level noise (Section~\ref{sec:theory}).
Then we present  \method{}~framework that leverages this property (Section~\ref{sec:framework}).

\subsection{Token-Level vs.\ Policy-Level Perturbations}
\label{sec:theory}

Given a policy and a query, GRPO samples a group of rollouts and constructs group-relative advantages for policy updates.
The exploration mechanism determines how these rollouts deviate from each other in policy space and directly influences the quality of gradient estimates.
In standard GRPO implementations, rollouts are sampled with a modest non-zero temperature to balance training stability and within-group diversity, which already introduces a baseline level of token-level randomness.
In this paper, we treat this default temperature as part of the GRPO baseline and focus on \emph{additional} sources of diversity beyond it.

\paragraph{Token-level perturbations.}
It refers to introducing additional step-wise randomness in action selection beyond the baseline sampling temperature used in GRPO.
A typical instance is sampling from a softened distribution
\begin{equation}
a_t \sim \pi^{\mathrm{tok}}(\cdot \mid s_t),
\
\pi^{\mathrm{tok}}(a \mid s_t)
=
\frac{\exp(l_t(a)/T)}{\sum_{a'} \exp(l_t(a')/T)},
\end{equation}
where $l_t(\cdot)$ denotes the logits at step $t$ and $T$ controls the perturbation strength. Equivalently, this process can be expressed using the Gumbel--Max formulation
\begin{equation}
a_t = \arg\max_a \bigl(l_t(a)/T + \epsilon_t(a)\bigr),
\
\{\epsilon_t(\cdot)\}_{t\ge1}\ \text{i.i.d.},
\end{equation}
where the noise sources are independent across steps. Importantly, while the injected noise sources are i.i.d., the realized tokens $\{a_t\}$ are generally not i.i.d.\ because the state $s_t$ depends on previous actions.

Token-level diversification draws actions from a perturbed conditional distribution at each step, $a_t \sim \pi^{\mathrm{tok}}(\cdot \mid s_t)$. Let $o=(a_1,\ldots,a_L)$ be the resulting sequence and define the \emph{prefix match event}
\begin{equation}
M_t := \mathbb{I}\{(a_1,\ldots,a_t)=(a_1^\star,\ldots,a_t^\star)\},
\end{equation}
where $o^\star$ denotes a deterministic reference decoding trace under the base policy, used only to define whether a rollout remains on the same decision path.
For any step $t$,
\begin{equation}
\Pr(M_t=1)=\prod_{j=1}^{t}\Pr\!\bigl(a_j=a_j^\star \mid M_{j-1}=1\bigr).
\end{equation}
Moreover, consider a regime where token-level randomness is increased relative to the GRPO baseline, so that the per-step deviation probability admits a lower bound $p>0$ over the considered horizon, \ie,
\begin{equation}
p \;\le\; \Pr\!\bigl(a_j\neq a_j^\star \mid M_{j-1}=1\bigr)\quad  j\in\{1,\ldots,t\},
\end{equation}
\begin{equation}
\Pr(M_t=1)\;\le\; (1-p)^t,
\end{equation}
so the mass of trajectories that share a common early prefix decays exponentially with $t$.

This decay implies that for long-horizon outputs, late tokens are increasingly generated under a mixture over divergent prefixes. A convenient proxy for the resulting loss of temporal dependence is the growth of $\Pr(M_t=0)$. In particular, for bounded features $f(a_t)$ and $g(a_s)$ with $\|f\|_\infty,\|g\|_\infty \le 1$ and $t<s$, one can obtain under a mild mixture/coupling assumption a problem-dependent constant $C>0$ such that
\begin{equation}
\bigl|\mathrm{Cov}(f(a_t),g(a_s)\mid q)\bigr|
\;\le\;
C\,\Pr(M_t=0).
\label{eq:tok_cov_proxy}
\end{equation}
Thus, as $\Pr(M_t=0)$ grows with $t$ for long generations, long-range cross-token dependence weakens, making earlier and later decisions less mutually consistent.

\begin{figure*}[t]
    \centering
\includegraphics[width=1\textwidth]{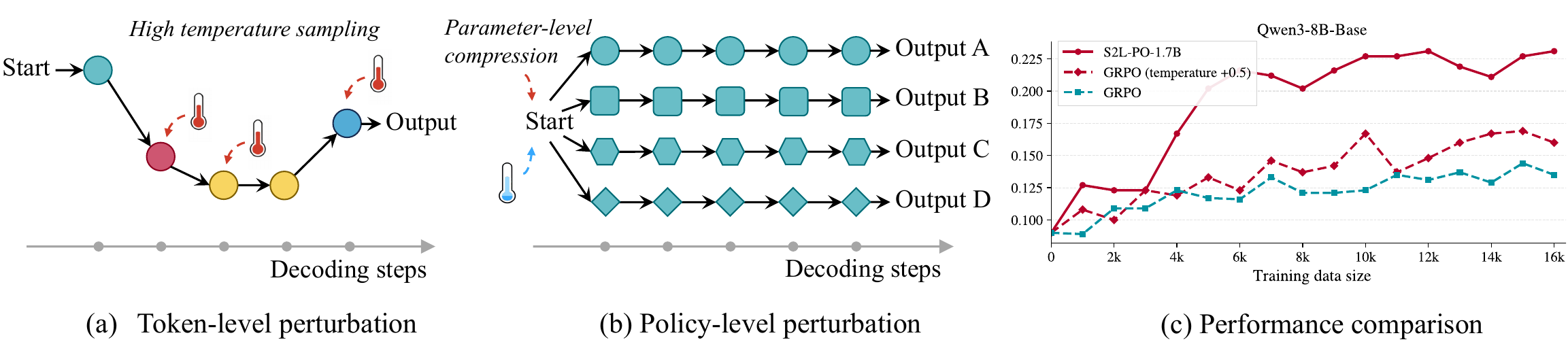}
    \caption{Two ways to increase rollout diversity under standard GRPO. (a) Increasing token-level perturbation (\eg, higher sampling temperature) introduces step-wise stochasticity that accumulates over decoding steps, often reducing long-range coherence. (b) Policy-level perturbations (\eg, parameter-level compression within a model family) induce temporally consistent trajectory deviations, yielding diverse yet structured policy paths. (c) Increasing token-level randomness beyond the GRPO baseline yields limited gains, whereas adding policy-level diversity enables more effective exploration and significantly better results. Blue thermometers indicate the default GRPO temperature; red indicate higher temperatures. Different colors denote token-level diversity; different shapes denote policy-level diversity.}
    \label{fig:token_vs_policy_diversity}
    \vspace{-0.8em}
\end{figure*}

\paragraph{Policy-level perturbations via parameter-level compression.}
In contrast, parameter-level compression (in this work, primarily via distillation to a smaller model) induces an effective structured perturbation in parameter space. We abstract this effect by an equivalent additive perturbation
\begin{equation}
\tilde{\theta} = \theta + \delta_\theta,
\
a_t \sim \pi_{\tilde{\theta}}(\cdot \mid s_t),
\end{equation}
where $\delta_\theta$ represents a time-invariant modification of the policy parameters during the rollout. Although the resulting logit shifts depend on context through the forward pass, all steps share the same perturbed policy $\pi_{\theta+\delta_\theta}$.

For any fixed state $s$, define the local distributional shift
\begin{equation}
\Delta \pi_s(a)
:=
\pi_{\theta+\delta_\theta}(a\mid s) - \pi_{\theta}(a\mid s).
\end{equation}
Since the same $\delta_\theta$ is applied at every step, the induced shifts $\{\Delta\pi_{s_t}\}_{t=1}^L$ are coupled through shared parameters, yielding trajectory-level deviations that are typically temporally correlated rather than independent across $t$. Intuitively, this correlation encourages trajectories to follow a coherent alternative strategy throughout the rollout generation.

\paragraph{Implications for gradient estimation in GRPO.}
We now examine how these differences affect policy-gradient estimates. For a sampled trajectory $o_i$ with group-relative advantage
\begin{equation}
A_i = \frac{r_i - \mu_r}{\sigma_r},
\end{equation}
and the GRPO policy-gradient contribution is
\begin{equation}
\mathbf{g}_i
=
A_i \nabla_\theta \log \pi_\theta(o_i \mid q)
=
A_i \sum_{t=1}^{L} \nabla_\theta \log \pi_\theta(a_{i,t} \mid s_{i,t}).
\end{equation}
Let $\mathbf{u}_{i,t} := \nabla_\theta \log \pi_\theta(a_{i,t}\mid s_{i,t})$ denote the per-step score function, so $\mathbf{g}_i = A_i \sum_{t=1}^{L} \mathbf{u}_{i,t}$. Since $A_i$ is a scalar shared across steps, it scales squared norms by $A_i^2$ but does not change the cross-step interference mechanism; we therefore analyze $\sum_{t=1}^{L}\mathbf{u}_{i,t}$ for clarity.

\paragraph{Gradient interference under token-level perturbations.}
Expanding the squared norm gives
\begin{equation}
\Bigl\|\sum_{t=1}^{L}\mathbf{u}_{i,t}\Bigr\|^2
=\sum_{t=1}^{L}\|\mathbf{u}_{i,t}\|^2
+2\!\!\sum_{1\le t<s\le L}\!\!\langle \mathbf{u}_{i,t},\mathbf{u}_{i,s}\rangle.
\label{eq:grad_expand}
\end{equation}
Under strengthened token-level perturbations, prefix divergence suppresses long-range dependence (cf.\ \eqref{eq:tok_cov_proxy}). Concretely, given bounded scalar projections $z_{i,t}:=\langle \mathbf{u}_{i,t},\mathbf{v}\rangle$ with $\|\mathbf{v}\|=1$, there exist tasks such that
\begin{equation}
\bigl|\mathrm{Cov}(z_{i,t},z_{i,s}\mid q)\bigr|
\;\le\; c\,\Pr(M_t=0),
\qquad t<s,
\label{eq:longrange_cov_bound}
\end{equation}
where $c>0$ is a constant depending on the bound of $z_{i,t}$. A formal proof via the law of total covariance, which eliminates the need of coupling assumptions and yields $c = 5B^2$ with $B$ bounding $|z_{i,t}|$, is given in Proposition~\ref{prop:token_cov} (Appendix~\ref{app:proofs}). As $\Pr(M_t=0)$ grows with $t$ in long outputs, correlations between distant steps are suppressed. Consequently, the large-lag cross terms in Eq.~\eqref{eq:grad_expand} are less coherent and tend to cancel in expectation, so the accumulation behaves closer to a random-walk sum when long-range alignments vanish:

\begin{equation}
\mathbb{E}\!\left[\Bigl\|\sum_{t=1}^{L}\mathbf{u}_{i,t}\Bigr\|^2\right]
\;\approx\;
\sum_{t=1}^{L}\mathbb{E}\!\left[\|\mathbf{u}_{i,t}\|^2\right]
\label{eq:grad_rw}
\end{equation}
For long-horizon reasoning (\eg, mathematical solution generation), this implies weaker cross-step reinforcement: per-token score contributions are less consistently aligned across the horizon, which can make the trajectory-level update direction noisier and less stable under GRPO.

\paragraph{Structured gradients under policy-level perturbations.}
For a fixed parameter-level perturbation $\delta_\theta$ shared across the rollout, a local expansion yields, for any fixed trajectory,
\begin{equation}
\nabla_\theta \log \pi_{\theta+\delta_\theta}(o\mid q)
\approx
\nabla_\theta \log \pi_{\theta}(o\mid q)
+
\nabla_\theta^2 \log \pi_{\theta}(o\mid q)\,\delta_\theta.
\label{eq:hess_shift}
\end{equation}

Although Eq.~\eqref{eq:hess_shift} is local, it highlights a key qualitative difference: the rollout is generated under a single, consistently shifted policy, which tends to induce temporally consistent deviations throughout the trajectory. As a result, per-step score contributions are more likely to remain aligned across time, increasing cross-step reinforcement in Eq.~\eqref{eq:grad_expand} and mitigating the long-lag cancellation behavior implicit in Eq.~\eqref{eq:grad_rw}. In long-horizon settings, this yields a more coherent trajectory-level gradient signal for GRPO group comparisons.

More formally, we show in Proposition~\ref{prop:policy_cov} (Appendix~\ref{app:proofs}) that the cross-step covariance under policy-level perturbation admits a \emph{positive lower bound}:
\begin{equation}
|\mathrm{Cov}(\tilde{z}_{i,t}, \tilde{z}_{i,s} \mid q)| \;\ge\; \gamma - 5B^2\Pr_{\mathrm{std}}(M_t\!=\!0) - O(\|\delta_\theta\|^3),
\label{eq:policy_lower_bound}
\end{equation}
where $\gamma := \mathbb{E}[\mathbf{v}^\top H_t \Sigma_\delta H_s^\top \mathbf{v} \mid q] > 0$ captures Hessian alignment under the parameter perturbation $\delta_\theta$, and $\Pr_{\mathrm{std}}$ is evaluated at the standard (unperturbed) temperature. Unlike the token-level upper bound in Eq.~\eqref{eq:longrange_cov_bound} that becomes vacuous as $\Pr(M_t\!=\!0) \to 1$, this lower bound remains positive when the Hessian alignment $\gamma$ is sufficiently large, ensuring constructive gradient interference across steps.

\paragraph{Takeaway.}
Token-level randomness can accumulate over decoding steps, which may break long-range coherence and increase gradient interference as shown in Eqs.~\eqref{eq:grad_expand}--\eqref{eq:grad_rw}; policy-level perturbations induce time-correlated deviations that preserve coherence and yield more structured GRPO gradients.
Fig.~\ref{fig:token_vs_policy_diversity} illustrates these two different ways by showcasing their mechanisms, as well as their actual impact on model performance.

\subsection{\method: Small-to-Large Policy Optimization}
\label{sec:framework}
Guided by the above analysis, we propose \method, a framework that leverages smaller (compressed) models for exploration while training larger models for exploitation. The core idea is simple: since smaller models provide richer behavioral diversity per unit compute, we use them to generate rollouts and train the larger policy via GRPO. The complete procedure is summarized in Algorithm~\ref{alg:w2spo}.

\paragraph{Mixed rollout generation.}
At each training step, we construct a mixed rollout distribution. Given a group size $G$, we sample $G_w$ candidates from a frozen smaller policy $\pi_\omega$ and $G_s = G - G_w$ candidates from the trainable larger policy $\pi_\theta$. The smaller policy remains frozen throughout training and serves solely as an exploration agent. The larger policy is updated using GRPO with group-relative advantages computed over the combined candidate set.

\paragraph{Progressive annealing.}
We linearly anneal the smaller-to-larger ratio over the first $T_{\mathrm{mix}}$ training steps. In our implementation, $T_{\mathrm{mix}}$ defaults to the first half of the total training process. Early in training, when the larger policy is unstable and prone to mode collapse, the smaller model provides diverse exploration at low cost, alleviating vanishing advantage signals. As training converges, reducing $G_w$ mitigates distribution mismatch, ensuring the final policy is optimized under its own behavior. After step $T_{\mathrm{mix}}$, the framework recovers standard on-policy GRPO.

\begin{algorithm}[h]
\caption{\method: GRPO with Progressive smaller-to-larger Rollout Sampling}
\label{alg:w2spo}
\begin{algorithmic}[1]
\REQUIRE Trainable policy $\pi_{\theta}$, frozen smaller policy $\pi_{\omega}$, reward function $r_{\phi}$, prompt dataset $\mathcal{D}$, group size $G$, total training steps $T$, transition step $T_{\mathrm{mix}}$, GRPO update steps per iteration $U$
\ENSURE Optimized policy $\pi_{\theta}$

\FOR{$i = 1$ to $T$}
    \STATE Sample a batch of prompts $\mathcal{D}_b \subset \mathcal{D}$.
    \IF{$i \le T_{\mathrm{mix}}$}
        \STATE \COMMENT{Progressive smaller-to-larger rollout phase}
        \STATE $\alpha \leftarrow 1 - \frac{i-1}{T_{\mathrm{mix}}-1}$
        \STATE $G_w \leftarrow \left\lceil \alpha G \right\rceil$, $G_s \leftarrow G - G_w$
    \ELSE
        \STATE \COMMENT{Pure on-policy GRPO phase}
        \STATE $G_w \leftarrow 0$, $G_s \leftarrow G$
    \ENDIF
    \FORALL{$q \in \mathcal{D}_b$}
        \IF{$G_w > 0$}
            \STATE Sample $G_w$ candidates from $\pi_{\omega}(\cdot \mid q)$
        \ENDIF
        \STATE Sample $G_s$ candidates from $\pi_{\theta}(\cdot \mid q)$
        \STATE Form candidate group $\mathcal{O}(q)$
    \ENDFOR
    \STATE Compute rewards using $r_{\phi}$ and group-relative advantages following GRPO
    \FOR{$u = 1$ to $U$}
        \STATE Update $\theta$ by maximizing the GRPO objective
    \ENDFOR
\ENDFOR
\STATE \textbf{return} $\pi_{\theta}$
\end{algorithmic}
\end{algorithm}

\paragraph{Compatibility and efficiency.}
\method~does not modify the GRPO objective, advantage construction, or optimization procedure; it only changes how rollouts are generated. As a result, it can be plugged into existing GRPO pipelines with minimal engineering effort and remains orthogonal to complementary techniques such as reward shaping or curriculum learning. In addition, using a smaller rollout policy reduces the per-sample generation cost, and the same weak-model rollouts can be reused across multiple strong-model training runs, further amortizing rollout compute. Since rollout generation is typically the dominant time and compute bottleneck in GRPO, these properties translate into direct savings in FLOPs and wall-clock time, and in principle can shorten end-to-end training by reducing the rollout burden.

\section{Experiment}
\subsection{Experiment Settings}

We train on the deduplicated DAPO17k~\cite{yu2025dapo} focusing on verifiable multi-step reasoning. For evaluation we choose four mathematical reasoning benchmarks: AIME 2024, AIME 2025~\cite{balunovic2025matharena}, MATH-500~\cite{hendrycks2021measuring}, and OlympiadBench~\cite{he2024olympiadbench}, and additionally report out-of-domain (OOD)
generalization on CommonsenseQA~\cite{talmor2019commonsenseqa}.
All evaluations are in \texttt{nothink} mode following the Qwen3 technical report~\cite{yang2025qwen3}. We sample 16 rollouts per question and compute Pass@1 by averaging the per-problem success indicator over the dataset.
To demonstrate cross-family generalizability, we evaluate on two model families: Qwen3-Base~\cite{yang2025qwen3} and InternLM2.5-Base~\cite{cai2024internlm2}. For Qwen3, the 1.7B and 4B variants serve as smaller rollout actors for 8B and 14B target policies. For InternLM2.5, the 1.8B model serves as explorer for the 7B target.
All runs are conducted on a single node with 8 NVIDIA L20 GPUs using the default GRPO configuration in \texttt{verl}~\cite{sheng2024hybridflow}.

\subsection{Main Results}

\paragraph{Small-to-large sampling improves both convergence speed and final performance.}
As illustrated in Fig.~\ref{fig:token_vs_policy_diversity}a and Fig.~\ref{fig:token_vs_policy_diversity}b, our approach leverages a smaller model to introduce policy-level diversity.
Fig.~\ref{fig:token_vs_policy_diversity}c contrasts this with increasing token-level noise (Temperature $=1.5$).
Unlike high-temperature sampling, which suffers from instability and regresses to significantly lower Pass@1 in later stages, our policy perturbation proves to be more stable, converges faster, and yields superior results.
In Fig.~\ref{fig:main_exp_scaling} and Table~\ref{tab:main_results_full}, we further observe that our method consistently reaches a higher performance ceiling than standard GRPO.
For example, in the Qwen3-8B-Base setting, using a 1.7B explorer improves performance by approximately 9\% compared to the baseline.
The initial boost from the smaller model's diversity builds a stronger foundation, allowing the larger model to stabilize at this superior level.
Notably, this improvement comes with reduced computational overhead. By offloading a portion of rollout generation to a smaller model and allowing for the reuse of these off-policy trajectories, we significantly reduce the total training FLOPs.
As shown in Table~\ref{tab:main_results_full}, S2L-PO achieves consistent improvements across two model families (Qwen3 and InternLM2.5) on four benchmarks with varying scale configurations.
As illustrated in Table~\ref{tab:commonqa_ood}, our method outperforms standard GRPO on CommonsenseQA (OOD).
Specifically, for Qwen3-8B-Base, the S2L-PO-4B variant achieves an accuracy of 67.8\% compared to 63.9\% for the vanilla baseline, indicating that the diverse exploration preserves the model's general reasoning capabilities and improves robustness.
We further extended this to Qwen3-14B-Base using S2L-PO-4B, observing similar gains over the vanilla GRPO baseline.

\begin{figure}[t]
    \centering
    \includegraphics[width=\linewidth]{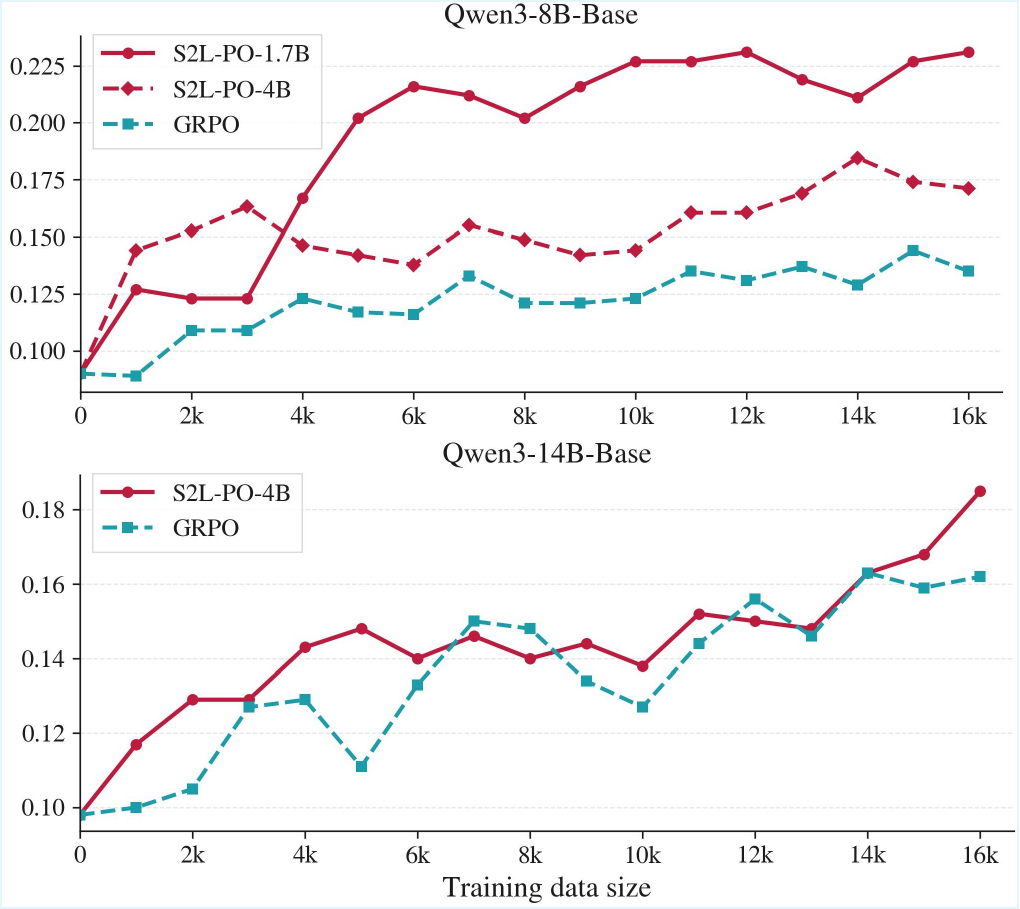}
    \caption{
    \method{} improves both final performance and convergence speed.
    Pass@1 on AIME24\&25 versus effective training progress for different scale transitions.
    S2L-PO uses a smaller model to generate part of each rollout group early in training and progressively anneals to fully on-policy GRPO.
    }
    \label{fig:main_exp_scaling}
\end{figure}

\begin{table}[t]
\centering
\caption{Cross-family main results (Pass@1, \%). We evaluate \method{} across two model families (Qwen3 and InternLM2.5) on four benchmarks. $\Delta$ denotes improvement over the GRPO baseline.}

\label{tab:main_results_full}
\small
\setlength{\tabcolsep}{2.5pt}
\renewcommand{\arraystretch}{1.0}

\resizebox{\columnwidth}{!}{%
\begin{tabular}{ll cccc}
\toprule
Family & Method & AIME24 & AIME25 & MATH-500 & OlympiadBench \\
\midrule
\multirow{6}{*}{Qwen3} & 8B GRPO & 15.0 & 12.1 & 57.3 & 18.1 \\
& \textbf{1.7B$\to$8B S2L} & \textbf{23.8} & \textbf{22.5} & \textbf{61.5} & \textbf{19.7} \\
& $\Delta$ & +8.8 & +10.4 & +4.2 & +1.7 \\
\cmidrule(lr){2-6}
& 14B GRPO & 18.0 & 12.9 & 58.7 & 18.9 \\
& \textbf{4B$\to$14B S2L} & \textbf{24.4} & \textbf{14.6} & \textbf{62.7} & \textbf{21.9} \\
& $\Delta$ & +6.4 & +1.7 & +4.0 & +3.0 \\
\midrule
\multirow{3}{*}{\shortstack{InternLM\\2.5}} & 7B GRPO & 0.1 & 0.1 & 18.6 & 2.2 \\
& \textbf{1.8B$\to$7B S2L} & \textbf{4.6} & \textbf{3.5} & \textbf{22.6} & \textbf{3.4} \\
& $\Delta$ & +4.5 & +3.4 & +4.0 & +1.2 \\
\bottomrule
\end{tabular}%
}
\end{table}

\begin{table}[t]
\centering
\normalsize
\setlength{\tabcolsep}{4.5pt}
\renewcommand{\arraystretch}{1.0}
\caption{Out-of-domain evaluation on CommonsenseQA. Accuracy (\%) of strong models trained on math data and evaluated on CommonsenseQA without additional tuning.}
\label{tab:commonqa_ood}

\resizebox{\columnwidth}{!}{%
\begin{tabular}{lccc|cc}
\toprule
& \multicolumn{3}{c}{Qwen3-8B-Base} & \multicolumn{2}{c}{Qwen3-14B-Base} \\
\cmidrule(lr){2-4}\cmidrule(lr){5-6}
& GRPO & S2L-PO-1.7B & S2L-PO-4B & GRPO & S2L-PO-4B \\
\midrule
CommonsenseQA
& 63.9 & 64.2 & \textbf{67.8}
& 67.2 & \textbf{70.7} \\
\bottomrule
\end{tabular}%
}
\end{table}

\subsection{Diversity Analysis}

\paragraph{Quantitative measurement of policy-level diversity.}
To validate that S2L-PO's gains stem from policy-level diversity, we design three complementary metrics measured on AIME24 with $K\!=\!64$ rollouts: Self-BLEU (to reflect text repetition), Edit Diversity (to reflect token-level difference), and Unique Answer Ratio (to reflect proportion of distinct final answers). As shown in Table~\ref{tab:diversity_metrics}, all three metrics are monotonic with model size. The 1.7B model achieves 21\% higher Unique Answer Ratio than 14B (0.576 vs.\ 0.476), confirming genuine strategy-level diversity.

\begin{table}[t]
\centering
\caption{Diversity metrics across model scales on AIME24 ($K\!=\!64$). All metrics are strictly monotonic: smaller models are more diverse.}
\label{tab:diversity_metrics}
\small
\setlength{\tabcolsep}{4pt}
\begin{tabular}{lccc}
\toprule
Model & Self-BLEU $\downarrow$ & Edit Div.\ $\uparrow$ & Unique Ans.\ $\uparrow$ \\
\midrule
1.7B & \textbf{0.314} & \textbf{0.788} & \textbf{0.576} \\
4B   & 0.334 & 0.773 & 0.523 \\
8B   & 0.336 & 0.769 & 0.492 \\
14B  & 0.352 & 0.760 & 0.476 \\
\bottomrule
\end{tabular}
\end{table}

\paragraph{Controlled experiment on rollout diversity.}
We filter out diverse rollouts from the small model so its diversity metrics match the large model. As shown in Table~\ref{tab:controlled_exp}, performance drops back to the GRPO baseline, demonstrating that S2L-PO's gains are driven by the small model's policy-level diversity, not by other factors such as off-policy mixing.

\begin{table}[t]
\centering
\caption{Controlled experiment: removing diversity from the small model's rollouts eliminates S2L-PO's advantage.}
\label{tab:controlled_exp}
\small
\setlength{\tabcolsep}{5pt}
\begin{tabular}{lcc}
\toprule
Config & AIME24 & AIME25 \\
\midrule
S2L-PO (1.7B$\to$8B) & \textbf{23.8} & \textbf{22.5} \\
S2L-PO (w/o diversity) & 14.7 & 12.0 \\
GRPO Baseline & 15.0 & 12.1 \\
\bottomrule
\end{tabular}
\end{table}

\begin{figure}[t]
    \centering
    \includegraphics[width=\linewidth]{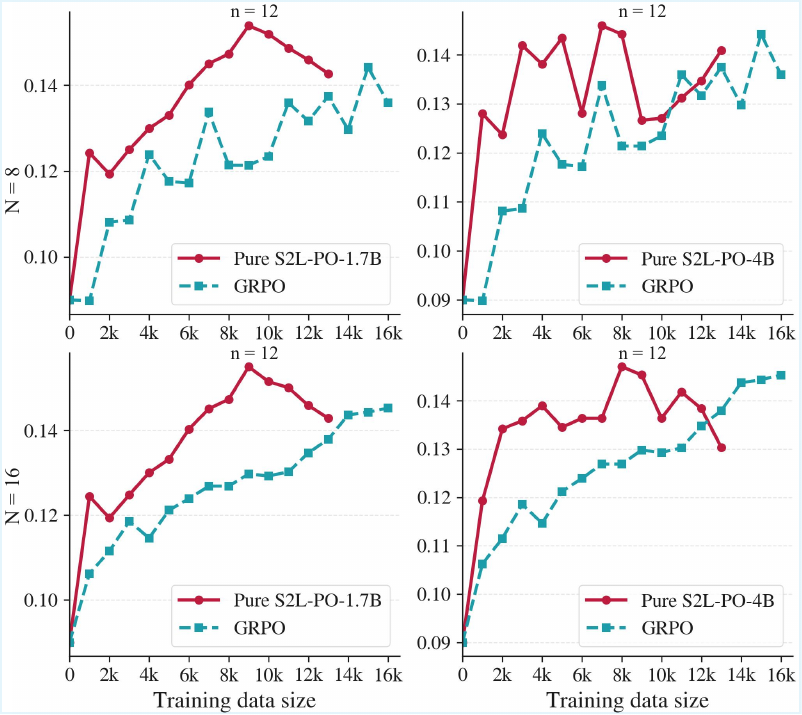}
    \caption{Pure small-model rollouts are insufficient for sustained improvement.
Here $N$ denotes the number of GRPO rollouts and $n$ denotes the number of small-model rollouts, allowing to match total compute across settings.
    }
    \label{fig:offline_only}
\end{figure}

\subsection{Ablation Study}
\paragraph{Pure small-model rollouts are not sufficient for sustained performance gains.}
Given the superior exploration capability of small models demonstrated above, a natural question arises: can we rely \emph{exclusively} on small-model rollouts throughout training?
Fig.~\ref{fig:offline_only} addresses this by evaluating a ``small-only'' baseline~\cite{wang2025offline,chen2026jackpot} that never transitions to the standard GRPO.
Initially, this baseline exhibits rapid performance gains, outpacing the vanilla GRPO.
However, this advantage is transient: as training progresses, performance plateaus and eventually regresses, failing to reach the peak performance achieved by our progressive annealing method.
We attribute this to the \textit{widening distribution shift} between the static small-model explorer and the evolving large-model learner.

\begin{figure}[t]
    \centering
    \includegraphics[width=\linewidth]{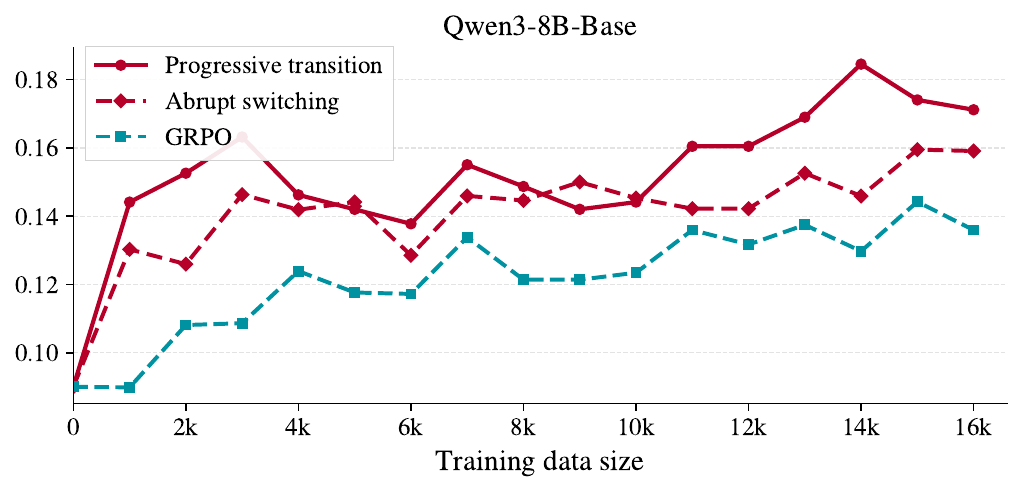}
    \vspace{-22pt}
    \caption{Progressive transition vs.\ abrupt two-phase switching.}
    \vspace{-2pt}
    \label{fig:abrupt_vs_progressive}
\end{figure}

\paragraph{Progressive transition vs.\ abrupt switch: gradual handover is strictly better.}
Having established the need for a transition, we further investigate how this handover should be executed. We compare our \emph{progressive} annealing (linear decay) against an \emph{abrupt} two-phase switch.
Fig.~\ref{fig:abrupt_vs_progressive} shows that the progressive transition consistently outperforms the abrupt switch.
The abrupt strategy introduces a sharp shock to the training distribution, causing instability as the model struggles to adapt to the sudden loss of external guidance.
In contrast, a gradual annealing allows the larger model to smoothly absorb the exploration benefits and progressively adapt its own policy to the high-quality regions discovered by the explorer, avoiding optimization divergence.

\begin{figure}[htbp]
    \centering
    \includegraphics[width=\linewidth]{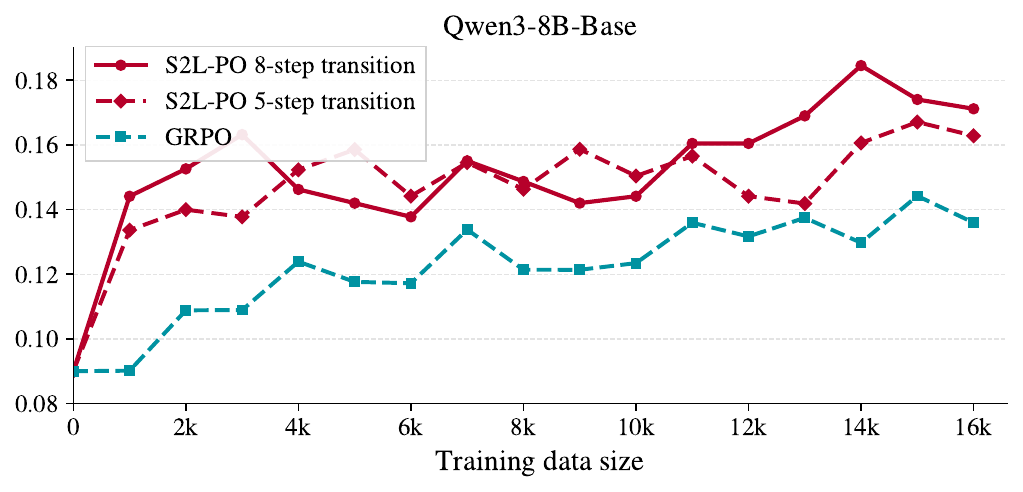}
    \caption{Ablation on transition length.
    We compare progressive annealing schedules that reduce the small-model rollout ratio to zero over the first 8 steps versus the first 5 steps.
    }
    \label{fig:progressive_steps}

\end{figure}

\paragraph{Ablation on transition length: insufficient annealing degrades performance.}
Finally, we analyze the impact of the transition duration. Fig.~\ref{fig:progressive_steps} compares schedules with different annealing lengths (\eg, transitioning over 8 steps vs.\ 5 steps).
Results indicate that shortening the transition phase leads to worse training stability and a lower performance ceiling.
This suggests that the handover from small-model-assisted rollouts to predominantly larger-model rollouts is a meaningful control knob: if the transition is too fast, the larger model may not have sufficient time to digest the diverse exploration signals before being forced back to its own limited distribution.
Therefore, a sufficiently long annealing period is essential for maximizing the downstream gains of the proposed method.

\section{Related Work}

\subsection{The Evolution of RLVR}
Reinforcement learning is a key paradigm for aligning large language models (LLMs) and improving reasoning ability in post-training, and recent practice is gradually shifting from preference-centric RLHF to RL with verifiable rewards (RLVR) that leverages automatically checkable signals~\cite{kaufmann2024survey, guo2025deepseek,zhao2025geometric}. Early RLHF systems commonly relied on PPO~\cite{schulman2017proximal} as an online optimizer, but PPO-style training typically requires expensive on-policy rollouts and maintaining multiple synchronized components (\eg, policy, reference model, and often a critic), leading to considerable engineering complexity and computational overhead.
To simplify training, Direct Preference Optimization (DPO)~\citep{rafailov2023direct} rewrites KL-regularized preference learning into a closed-form classification objective, avoiding online rollouts and an explicit critic, and thus substantially streamlining the pipeline.
More recently, Group Relative Policy Optimization (GRPO)~\citep{shao2024deepseekmath} replaces the critic with group-relative advantage estimation using within-group statistics, reducing training cost while retaining PPO-style update stability, and is a standard baseline for reasoning-oriented RL post-training.
Nevertheless, RL post-training for reasoning can still be dominated by rollout cost and limited sample efficiency~\cite{hassani2025towards, yu2025dapo, zheng2025group,gao2025soft,wang2025offline,mroueh2025revisiting,lanchantin2025bridging,zhang2025group}, especially for long-horizon reasoning tasks that require repeated sampling, scoring, and backpropagation over long sequences.

\subsection{Diversity and Exploration in GRPO-Style Training}
A central practical factor for GRPO-style methods is the diversity of candidate trajectories sampled for each prompt: when the sampled group becomes overly homogeneous or degenerates, advantage estimation and gradient signals can deteriorate, potentially causing entropy collapse, mode collapse, and insufficient exploration~\citep{yu2025dapo, hao2025rethinking, jin2025revisiting}.
Most existing approaches encourage exploration by injecting randomness at the \emph{token level}, \eg, via higher temperature, top-$p$ sampling, or entropy regularization~\citep{zhuang2025exploring, yang2025let, wang2025beyond, nguyen2024turning,huang2025qerl,yang2025llm2,lin2024critical,shi2024unchosen}. However, such action-space stochasticity is local and step-wise, and may not reliably yield \emph{trajectory-level} structured diversity; moreover, aggressively increasing token uncertainty can hurt solution quality and training stability~\citep{wang2025arbitrary}.
Beyond decoding-time randomness, several works improve group diversity through data- or objective-level interventions, such as selecting more diverse response types or explicitly rewarding within-group diversity~\citep{anschel2025group, chen2025dra,zhang2025scaf,zhang2025grpo,bamba2025xrpo}. While effective in some settings, these approaches often require additional engineering or computation, and their gains may be less robust when transferring to new tasks or distributions.

Compared with token-level randomness and dataset-level heuristics, exploration via \emph{policy}-level perturbation has received relatively less attention in LLM RL post-training. Recent off-policy methods~\cite{wang2025offline,lanchantin2025bridging,chen2026jackpot} reuse previously generated rollouts or leverage external data to reduce sampling cost, but purely offline rollouts struggle to sustain performance improvement as the learner's policy evolves, due to widening distribution shift. Ensemble-based approaches can provide diverse policies but require maintaining multiple models of comparable capacity, incurring significant additional cost. S2L-PO instead introduces policy-level diversity at near-zero cost by reusing an existing smaller model from the same family, and its progressive annealing strategy ensures sustained improvement by smoothly transitioning from small-model exploration to on-policy learning, avoiding the performance plateau inherent in purely offline approaches.

\section{Conclusion}

We have presented \method, a new framework that enhances GRPO by utilizing smaller models as structured explorers for larger learners.
Because smaller models obtained via parameter-level compression (\eg, distillation) inherently exhibit policy-level diversity, we provide empirical and theoretical evidence that adding this diversity to standard GRPO leads to more coherent exploration and improved learning signals than injecting token-level randomness alone.
With a designed annealing strategy to balance exploration and exploitation, \method~achieves significant gains in mathematical reasoning tasks while reducing rollout compute and accelerating convergence. Our results demonstrate that leveraging the inherent diversity from parameter-level perturbation is a powerful and efficient strategy for RL training. 

\section*{Impact Statement}
This work exclusively relies on publicly available open-source datasets that have been widely used and validated in prior academic research. No new text, images, audio, or video content is generated or collected as part of this study. All datasets are used strictly for research purposes, and we do not engage in any commercial deployment or application of the data or the trained models.

\section*{Acknowledgements}
This work was partly supported by the National Natural Science Foundation of China (Grant No.~62576191), the Shenzhen Science and Technology Program (ZDCY20250901103533010) and Tsinghua SIGS KA Cooperation Fund.

\bibliography{ref}
\bibliographystyle{icml2026}

\newpage
\appendix
\onecolumn

\section{Reproducibility Statement}
To facilitate reproducibility and transparency, we will release the complete codebase of this project as open-source software. The overall methodology and algorithmic design are described in detail in Section~3. The experimental setup is specified in Section~4.1, and the complete training protocols, implementation details, and key hyperparameter configurations are provided in the appendix. These materials together are sufficient to reproduce all experimental results reported in this paper.

\section{Use of Large Language Models}
During the preparation of this manuscript, we used a large language model solely for language editing purposes, including improving grammar, clarity, and overall readability at the sentence and paragraph levels. The model was not used to generate research ideas, design methods, conduct experiments, analyze results, or draw scientific conclusions. All technical content, experimental design, analyses, and interpretations were written, verified, and approved by the authors. Every model-assisted edit was carefully reviewed to ensure correctness, and the authors take full responsibility for the accuracy and integrity of the final manuscript.

\section{Hyperparameter Settings}
\label{app:hyperparams}

All experiments are trained with GRPO using a fixed set of core hyperparameters across runs. We set the training batch size to 1024, the maximum prompt length to 512 tokens, and the maximum response length to 4096 tokens, while filtering overlong prompts and treating truncation as an error to avoid silent data corruption. The actor policy is optimized with a learning rate of $1\times 10^{-6}$ using PPO-style updates with a mini-batch size of 16 and a per-GPU micro-batch size of 2. We enable KL regularization between the actor and a reference policy with KL coefficient $1\times 10^{-3}$ and use a low-variance KL estimator, while setting the entropy bonus coefficient to 0 and not incorporating KL into the reward. For rollout and log-probability computation, we use micro-batching with size 2 per GPU, and keep tensor model parallelism at 1. All runs are performed on 8 GPUs, with checkpointing enabled at every training step and evaluation triggered every 100 steps, and training is terminated by a fixed number of training steps rather than a fixed number of epochs. We use a progressive off-policy to on-policy schedule over 16 logical steps, where the first 8 logical steps linearly decrease the offline sampling ratio from 1 to 0 and increase the online rollout ratio from 0 to 1, and the remaining logical steps are fully on-policy.

\section{Deployment of Progressive Off-to-On GRPO}

In this appendix, we describe how our progressive off-policy to on-policy schedule is deployed within the GRPO training pipeline. The key idea is to generate candidate trajectories from a mixture of an offline source and the current policy, and to linearly anneal the offline contribution to zero. This staged procedure provides low-cost exploration early in training while ensuring that the final policy is optimized under its own on-policy distribution.

\section{Formal Proofs for Theoretical Analysis}
\label{app:proofs}

In this appendix we provide rigorous proofs for the theoretical claims in Section~\ref{sec:theory}. We formalize the distinction between token-level and policy-level perturbations in terms of cross-step covariance bounds for the GRPO gradient signal.

\subsection{Notation and Setup}

Let $o = (a_1, \dots, a_L)$ denote a sampled trajectory for prompt $q$, and $o^\star = (a_1^\star, \dots, a_L^\star)$ a deterministic reference decoding trace under the base policy $\pi_\theta$. Define the prefix match indicator $M_t := \mathbb{I}\{(a_1, \dots, a_t) = (a_1^\star, \dots, a_t^\star)\}$. For the per-step score function $\mathbf{u}_{i,t} := \nabla_\theta \log \pi_\theta(a_{i,t} \mid s_{i,t})$, define the scalar projection $z_{i,t} := \langle \mathbf{u}_{i,t}, \mathbf{v} \rangle$ for a unit vector $\mathbf{v}$, and assume $|z_{i,t}| \le B$.

\begin{lemma}[Prefix Match Decay]
\label{lem:prefix_decay}
Under token-level perturbation with per-step deviation probability lower-bounded by $p > 0$, the prefix match probability satisfies:
\[
\Pr(M_t = 1) = \prod_{j=1}^{t} \Pr(a_j = a_j^\star \mid M_{j-1} = 1) \le (1-p)^t.
\]
\end{lemma}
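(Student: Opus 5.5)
The plan is to prove the identity by the chain rule and then bound each factor. First, observe that the prefix match events are nested: $M_j = 1$ holds if and only if $M_{j-1}=1$ and $a_j = a_j^\star$, with the convention $M_0 \equiv 1$ (the empty prefix always matches). Hence $\{M_t=1\} = \bigcap_{j=1}^t \{a_j = a_j^\star\}$ as a decreasing sequence of events $\{M_1=1\}\supseteq\{M_2=1\}\supseteq\cdots$. Applying the chain rule of probability to this nested intersection gives
\[
\Pr(M_t=1) = \prod_{j=1}^{t}\Pr\bigl(M_j=1 \mid M_{j-1}=1\bigr) = \prod_{j=1}^{t}\Pr\bigl(a_j=a_j^\star \mid M_{j-1}=1\bigr),
\]
where the second equality uses that, on $\{M_{j-1}=1\}$, the event $\{M_j=1\}$ coincides with $\{a_j=a_j^\star\}$. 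This establishes the stated equality. I would prove it formally by a one-line induction on $t$: $\Pr(M_t=1)=\Pr(M_{t-1}=1)\Pr(a_t=a_t^\star\mid M_{t-1}=1)$.

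For the inequality, I would invoke the hypothesis (the per-step lower bound on deviation stated in Section~\ref{sec:theory}) that $\Pr(a_j\neq a_j^\star\mid M_{j-1}=1)\ge p$ for each $j\le t$. This is equivalent to $\Pr(a_j=a_j^\star\mid M_{j-1}=1)\le 1-p$. Since every factor lies in $[0,1-p]$ and is nonnegative, the product is at most $(1-p)^t$.

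The only delicate point is the case where some conditioning event has probability zero, so that a conditional probability is undefined. I would handle this first. If $\Pr(M_{j-1}=1)=0$ for some $j\le t$, then nestedness gives $\Pr(M_t=1)=0\le(1-p)^t$ directly, and the product identity holds under any convention that assigns the undefined factors a value in $[0,1]$. Otherwise all conditionals are well defined and the argument above applies. Beyond this bookkeeping, there is no real obstacle; the content of the lemma is simply the chain rule plus the assumed uniform deviation bound.
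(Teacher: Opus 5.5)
Your proposal is correct and follows the paper's own argument: the nested decomposition $\{M_t=1\}=\{M_{t-1}=1\}\cap\{a_t=a_t^\star\}$ with $\Pr(M_0=1)=1$, the chain rule, and bounding each factor by $1-p$. Your treatment of zero-probability conditioning events is a sound extra step that the paper omits: the first index $m$ with $\Pr(M_m=1)=0$ forces the factor $\Pr(a_m=a_m^\star\mid M_{m-1}=1)$ to vanish, so the product identity holds under any finite convention for the later factors.
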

\begin{proof}
Since $\{M_t = 1\} = \{M_{t-1} = 1\} \cap \{a_t = a_t^\star\}$, the chain rule gives $\Pr(M_t = 1) = \prod_{j=1}^{t} \Pr(a_j = a_j^\star \mid M_{j-1} = 1)$, with base case $\Pr(M_0 = 1) = 1$. The assumption $\Pr(a_j \neq a_j^\star \mid M_{j-1} = 1) \ge p$ yields each factor $\le 1 - p$, so the product $\le (1-p)^t$.
\end{proof}

\begin{proposition}[Token-Level Covariance Upper Bound]
\label{prop:token_cov}
Let $f_t, g_s$ be random variables measurable with respect to the trajectory prefix $(a_1, \dots, a_t)$ and $(a_1, \dots, a_s)$ respectively, with $\|f_t\|_\infty, \|g_s\|_\infty \le 1$ and $t < s$. Then:
\[
|\mathrm{Cov}(f_t, g_s \mid q)| \le 5\,\Pr(M_t = 0).
\]
\end{proposition}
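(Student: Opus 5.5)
We condition on the indicator $M_t$ and use the law of total covariance. Write $\Pr(\cdot)$ for $\Pr(\cdot\mid q)$ and abbreviate $m:=\Pr(M_t=0)$. The key structural fact is that $M_t$ is measurable with respect to the prefix $(a_1,\dots,a_t)$. On the event $\{M_t=1\}$ that prefix equals the fixed reference prefix $(a_1^\star,\dots,a_t^\star)$, so $f_t$ is a deterministic constant $c_f:=f_t(a_1^\star,\dots,a_t^\star)$ there. The covariance then decomposes as
\[
\mathrm{Cov}(f_t,g_s)=\mathbb{E}\bigl[\mathrm{Cov}(f_t,g_s\mid M_t)\bigr]+\mathrm{Cov}\bigl(\mathbb{E}[f_t\mid M_t],\,\mathbb{E}[g_s\mid M_t]\bigr).
\]
We bound the two terms separately.

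\textbf{Within-group term.} On $\{M_t=1\}$, $f_t$ is constant, so $\mathrm{Cov}(f_t,g_s\mid M_t=1)=0$. On $\{M_t=0\}$ we only have the trivial estimate $|\mathrm{Cov}(f_t,g_s\mid M_t=0)|\le \mathbb{E}|f_tg_s|+|\mathbb{E}f_t||\mathbb{E}g_s|\le 2$, using $\|f_t\|_\infty,\|g_s\|_\infty\le1$. Weighting by the probability of this event gives a contribution of at most $2m$.

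\textbf{Between-group term.} The conditional expectations $\mathbb{E}[f_t\mid M_t]$ and $\mathbb{E}[g_s\mid M_t]$ are functions of a Bernoulli variable with $\Pr(M_t=0)=m$. For $X=x_1$ on $\{M_t=1\}$ and $X=x_0$ on $\{M_t=0\}$, and $Y$ defined analogously, a direct computation gives
\[
\mathrm{Cov}(X,Y)=m(1-m)(x_1-x_0)(y_1-y_0).
\]
Each difference is bounded by $2$ in absolute value, so this term is at most $4m(1-m)\le 4m$.

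Summing the two contributions gives $|\mathrm{Cov}(f_t,g_s\mid q)|\le 6m$. This is already of the correct form, but the constant is $6$ rather than $5$.

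\textbf{Sharpening the within-group term.} The only genuine difficulty is bookkeeping the constant. To recover $5$, we bound the variance of $f_t$ restricted to $\{M_t=0\}$ rather than its range. Write $f=f_t$, $g=g_s$, and $\mathrm{Var}_0$, $\mathrm{Cov}_0$ for variance and covariance conditional on $\{M_t=0\}$. Since $|f|\le1$, we have $\mathrm{Var}_0(f)\le \mathbb{E}[f^2\mid M_t=0]\le1$, and likewise for $g$. Cauchy--Schwarz then gives
\[
|\mathrm{Cov}_0(f,g)|\le\sqrt{\mathrm{Var}_0(f)\,\mathrm{Var}_0(g)}\le1.
\]
The within-group contribution therefore improves to at most $m$. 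Adding the between-group bound $4m$ yields the stated $5\Pr(M_t=0)$.

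If a cruder count is preferred, one can instead expand $\mathrm{Cov}(f,g)=\mathbb{E}[fg]-\mathbb{E}f\,\mathbb{E}g$ and write each expectation as $(1-m)(\text{value on }M_t=1)+m(\text{value on }M_t=0)$. Collecting the $O(m)$ terms gives an explicit small constant, and $5$ is comfortably achievable that way.

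No assumption on token-level sampling enters the argument beyond the measurability of $f_t$ and $M_t$ with respect to the prefix. The exponential rate then follows by combining the bound with Lemma~\ref{lem:prefix_decay}, which gives $\Pr(M_t=0)\ge 1-(1-p)^t$; this lower bound tends to one, so the covariance bound becomes vacuous for long generations.
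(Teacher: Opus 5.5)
Your proposal is correct and is essentially the paper's proof. Both use the law of total covariance conditioned on $M_t$: the within-group term vanishes on $\{M_t=1\}$ and is at most $\Pr(M_t=0)$ by Cauchy--Schwarz on $\{M_t=0\}$, and the Bernoulli between-group term $\alpha(1-\alpha)(\mu_1-\mu_0)(\nu_1-\nu_0)$ is at most $4\Pr(M_t=0)$, giving $5\Pr(M_t=0)$ (your preliminary constant-$6$ estimate and closing remark on prefix decay are harmless detours).
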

\begin{proof}
We apply the law of total covariance with conditioning variable $M_t \in \{0, 1\}$:
\[
\mathrm{Cov}(f_t, g_s) = \underbrace{\mathbb{E}[\mathrm{Cov}(f_t, g_s \mid M_t)]}_{(I)} + \underbrace{\mathrm{Cov}(\mathbb{E}[f_t \mid M_t], \mathbb{E}[g_s \mid M_t])}_{(II)}.
\]

\textbf{Term (I):} When $M_t = 1$, the prefix $(a_1, \dots, a_t)$ is fixed to $(a_1^\star, \dots, a_t^\star)$, so $f_t$ is a constant and $\mathrm{Cov}(f_t, g_s \mid M_t = 1) = 0$. When $M_t = 0$, Cauchy--Schwarz gives $|\mathrm{Cov}(f_t, g_s \mid M_t = 0)| \le 1$. Thus $|(I)| \le \Pr(M_t = 0) \cdot 1$.

\textbf{Term (II):} Let $\alpha = \Pr(M_t = 1)$, $\mu_k = \mathbb{E}[f_t \mid M_t = k]$, $\nu_k = \mathbb{E}[g_s \mid M_t = k]$ for $k \in \{0,1\}$. Since $\mathbb{E}[f_t \mid M_t]$ and $\mathbb{E}[g_s \mid M_t]$ are functions of a Bernoulli variable:
\[
(II) = \alpha(1-\alpha)(\mu_1 - \mu_0)(\nu_1 - \nu_0).
\]
Using $\alpha(1-\alpha) \le 1 - \alpha = \Pr(M_t = 0)$ and $|\mu_1 - \mu_0|, |\nu_1 - \nu_0| \le 2$:
\[
|(II)| \le 4\,\Pr(M_t = 0).
\]

Combining: $|\mathrm{Cov}(f_t, g_s)| \le \Pr(M_t = 0) + 4\,\Pr(M_t = 0) = 5\,\Pr(M_t = 0)$.
\end{proof}

\begin{corollary}[Gradient Projection Covariance]
\label{cor:grad_cov}
For the scalar projections $z_{i,t} = \langle \mathbf{u}_{i,t}, \mathbf{v} \rangle$ with $|z_{i,t}| \le B$ and $t < s$:
\[
|\mathrm{Cov}(z_{i,t}, z_{i,s} \mid q)| \le 5B^2\,\Pr(M_t = 0).
\]
\end{corollary}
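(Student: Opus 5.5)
The plan is to reduce the corollary to Proposition~\ref{prop:token_cov} by a rescaling argument. Set $f_t := z_{i,t}/B$ and $g_s := z_{i,s}/B$. Since $|z_{i,t}|, |z_{i,s}| \le B$ by the standing assumption, we get $\|f_t\|_\infty, \|g_s\|_\infty \le 1$.

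The first thing to confirm is measurability. We need $z_{i,t} = \langle \nabla_\theta \log \pi_\theta(a_{i,t}\mid s_{i,t}), \mathbf{v}\rangle$ to be a function of the prefix $(a_{i,1},\dots,a_{i,t})$. This holds because the state $s_{i,t}$ is determined by $q$ and $(a_{i,1},\dots,a_{i,t-1})$, while $\theta$ and $\mathbf{v}$ are fixed (non-random) once we condition on $q$. In the same way, $z_{i,s}$ is a function of the prefix up to time $s$. Hence $f_t$ and $g_s$ satisfy the hypotheses of Proposition~\ref{prop:token_cov}, with $t<s$ and $M_t$ defined relative to the same reference trace $o^\star$.

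Applying the proposition gives $|\mathrm{Cov}(f_t,g_s\mid q)| \le 5\Pr(M_t=0)$. Covariance is bilinear, so $\mathrm{Cov}(z_{i,t},z_{i,s}\mid q) = B^2\,\mathrm{Cov}(f_t,g_s\mid q)$. Multiplying through by $B^2$ yields the claimed bound $5B^2\Pr(M_t=0)$.

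There is no real obstacle here. The only point needing care is the measurability check, in particular that $\mathbf{v}$ and $\theta$ do not depend on the sampled trajectory. If $\mathbf{v}$ were chosen data-dependently, the argument would fail, so I would state explicitly that $\mathbf{v}$ is a fixed unit vector. The case $B=0$ is trivial, since both sides vanish.

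Finally, combining with Lemma~\ref{lem:prefix_decay} gives a concrete form of the bound: $\Pr(M_t=0)\ge 1-(1-p)^t$ under token-level perturbation. I would note, though, that Lemma~\ref{lem:prefix_decay} bounds $\Pr(M_t=0)$ from below, not above. It therefore only explains why the upper bound in the corollary loosens toward $5B^2$ as $t$ grows; it is not itself needed for the proof.
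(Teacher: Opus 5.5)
Your proof is correct and matches the paper's argument exactly. Like the paper, you rescale to $f_t = z_{i,t}/B$ and $g_s = z_{i,s}/B$, apply Proposition~\ref{prop:token_cov}, and multiply by $B^2$; your explicit check that $z_{i,t}$ is a function of the prefix $(a_{i,1},\dots,a_{i,t})$ with $\theta$ and $\mathbf{v}$ fixed is a hypothesis the paper uses without stating, so it is a worthwhile addition rather than a deviation.
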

\begin{proof}
Apply Proposition~\ref{prop:token_cov} to $f_t = z_{i,t}/B$ and $g_s = z_{i,s}/B$, then scale by $B^2$.
\end{proof}

\begin{proposition}[Policy-Level Covariance Lower Bound]
\label{prop:policy_cov}
Let $\tilde{z}_{i,t} = z_{i,t} + \mathbf{v}^\top H_t \delta_\theta$ denote the first-order perturbed score projection, where $H_t = \nabla_\theta^2 \log \pi_\theta(a_t \mid s_t)$ and $\delta_\theta$ has zero mean and covariance $\Sigma_\delta$. If $\gamma := \mathbb{E}[\mathbf{v}^\top H_t \Sigma_\delta H_s^\top \mathbf{v} \mid q] > 0$, then for $t < s$:
\[
|\mathrm{Cov}(\tilde{z}_{i,t}, \tilde{z}_{i,s} \mid q)| \ge \gamma - 5B^2\,\Pr^{\mathrm{std}}(M_t = 0) - O(\|\delta_\theta\|^3),
\]
where $\Pr^{\mathrm{std}}$ denotes prefix divergence under the standard (unperturbed) temperature.
\end{proposition}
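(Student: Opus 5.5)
The plan is to expand the covariance of the perturbed projections bilinearly and then bound the pieces separately. Write $w_t := \mathbf{v}^\top H_t \delta_\theta$, so that $\tilde z_{i,t} = z_{i,t} + w_t$. Then
\[
\mathrm{Cov}(\tilde z_{i,t},\tilde z_{i,s}\mid q) = \mathrm{Cov}(z_{i,t},z_{i,s}\mid q) + \mathrm{Cov}(z_{i,t},w_s\mid q) + \mathrm{Cov}(w_t,z_{i,s}\mid q) + \mathrm{Cov}(w_t,w_s\mid q).
\]
All randomness is joint over $\delta_\theta$ and the trajectory sampled at the standard temperature. The idea is that the last term produces $\gamma$, the first term is controlled by Corollary~\ref{cor:grad_cov}, and the cross terms are absorbed into the remainder. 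The final bound then follows from the reverse triangle inequality $|X|\ge X \ge \gamma - |\text{rest}|$.

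\textbf{Step 1: the quadratic term.} Condition on the trajectory and use that $\delta_\theta$ has zero mean and covariance $\Sigma_\delta$. This gives $\mathbb{E}[w_t w_s\mid \text{traj}] = \mathbf{v}^\top H_t\Sigma_\delta H_s^\top\mathbf{v}$ and $\mathbb{E}[w_t\mid\text{traj}]=0$. Hence $\mathrm{Cov}(w_t,w_s\mid q)=\gamma$ exactly, provided $\delta_\theta$ is independent of the Hessians $H_t,H_s$ evaluated along the trajectory.

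\textbf{Step 2: the cross terms.} Under the same independence, $\mathbb{E}[z_{i,t}w_s\mid\text{traj}]=z_{i,t}\,\mathbf{v}^\top H_s\,\mathbb{E}[\delta_\theta]=0$, and $\mathbb{E}[w_s]=0$, so both cross terms vanish at leading order. The genuine dependence enters because the trajectory is drawn from $\pi_{\theta+\delta_\theta}$, so its law itself depends on $\delta_\theta$. To handle this, expand the sampling distribution around the unperturbed law to second order in $\delta_\theta$:
\[
\pi_{\theta+\delta_\theta} = \pi_\theta\bigl(1 + \nabla\log\pi\cdot\delta_\theta + O(\|\delta_\theta\|^2)\bigr).
\]
Then pair this with the zero-mean property of $\delta_\theta$. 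The first-order corrections cancel. The second-order corrections to the $zz$ term and the $zw$ terms are either of order $\|\delta_\theta\|^2$ multiplied by bounded quantities, which are folded into $\gamma$'s effective definition, or of order $\|\delta_\theta\|^3$. Together these give the $O(\|\delta_\theta\|^3)$ remainder.

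\textbf{Step 3: the base term.} After Step 2 reduces expectations to the standard-temperature law, apply Corollary~\ref{cor:grad_cov} to $z_{i,t},z_{i,s}$. This yields $|\mathrm{Cov}(z_{i,t},z_{i,s}\mid q)|\le 5B^2\Pr^{\mathrm{std}}(M_t=0)$. Combining, $\mathrm{Cov}(\tilde z_{i,t},\tilde z_{i,s}\mid q)\ge \gamma-5B^2\Pr^{\mathrm{std}}(M_t=0)-O(\|\delta_\theta\|^3)$, and the absolute-value statement follows.

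\textbf{Main obstacle: Step 2.} The statement treats $\tilde z$ as a first-order object but leaves implicit which measure generates the trajectory. The cleanest route is to fix that measure as the standard-temperature law and regard $\delta_\theta$ as independent of it. In that case Steps 1–3 are exact and the $O(\|\delta_\theta\|^3)$ term is only the Taylor remainder from Eq.~\eqref{eq:hess_shift}, with a third-derivative boundedness assumption. If the sampling law is instead taken to depend on $\delta_\theta$, second-order cross contributions of size $O(\|\delta_\theta\|^2)$ could compete with $\gamma$, which is itself of size $\|\Sigma_\delta\|$. I would therefore need either to state the independence assumption explicitly or to assume third-moment control of $\delta_\theta$, so that all non-$\gamma$ second-order terms cancel by symmetry of $\delta_\theta$'s distribution.
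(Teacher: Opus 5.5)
This is essentially the paper's proof. It uses the same four-term bilinear expansion, kills the cross terms with $\mathbb{E}[\delta_\theta]=0$ plus independence of $\delta_\theta$ from the standard-temperature trajectory (the paper's ``zero-order approximation''), gets $\gamma$ from the quadratic term by conditioning on the trajectory, and bounds the base term with Corollary~\ref{cor:grad_cov} and the reverse triangle inequality.

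Commit to that independence reading and drop the attempt in Step~2 to handle a $\delta_\theta$-dependent sampling law. Under independence, $\mathrm{Cov}(\tilde z_{i,t},\tilde z_{i,s}\mid q)=\mathrm{Cov}(z_{i,t},z_{i,s}\mid q)+\gamma$ holds exactly, so the remainder can be taken to be zero and no third-derivative bound is needed. Your suspicion about the $\delta_\theta$-dependent law is right: it produces cross terms of order $\|\delta_\theta\|^2$, the same order as $\gamma$. For example, with i.i.d.\ Bernoulli$(\sigma(\theta))$ steps and sigmoid $\sigma$, the two cross terms sum to $-2\gamma+O(\|\delta_\theta\|^3)$, so the paper's assertion that this dependence costs only $O(\|\delta_\theta\|^3)$ does not hold in general. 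Neither folding such terms into $\gamma$ nor invoking symmetry of $\delta_\theta$ (which only kills odd moments) removes them.
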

\begin{proof}
Expanding by bilinearity of covariance:
\[
\mathrm{Cov}(\tilde{z}_t, \tilde{z}_s) = \mathrm{Cov}(z_t, z_s) + \mathrm{Cov}(z_t, \mathbf{v}^\top H_s \delta_\theta) + \mathrm{Cov}(\mathbf{v}^\top H_t \delta_\theta, z_s) + \mathrm{Cov}(\mathbf{v}^\top H_t \delta_\theta, \mathbf{v}^\top H_s \delta_\theta).
\]

\textbf{Cross-terms vanish:} In the zero-order approximation, $z_t$ and $H_s$ are computed along the base policy trajectory and are independent of $\delta_\theta$. Since $\mathbb{E}[\delta_\theta] = 0$, both $\mathbb{E}[z_t \cdot \mathbf{v}^\top H_s \delta_\theta]$ and $\mathbb{E}[z_t] \cdot \mathbb{E}[\mathbf{v}^\top H_s \delta_\theta]$ vanish, giving $\mathrm{Cov}(z_t, \mathbf{v}^\top H_s \delta_\theta) = 0$. The trajectory's $O(\|\delta_\theta\|)$ dependence on $\delta_\theta$ contributes $O(\|\delta_\theta\|^3)$ to the covariance.

\textbf{Perturbation term:} Both means vanish ($\mathbb{E}[\delta_\theta] = 0$), so:
\[
\mathrm{Cov}(\mathbf{v}^\top H_t \delta_\theta, \mathbf{v}^\top H_s \delta_\theta) = \mathbb{E}[(\mathbf{v}^\top H_t \delta_\theta)(\delta_\theta^\top H_s^\top \mathbf{v})] = \mathbb{E}[\mathbf{v}^\top H_t \Sigma_\delta H_s^\top \mathbf{v} \mid q] = \gamma,
\]
where we used the independence of $\delta_\theta$ and the trajectory (at zero order) to factor the expectation.

\textbf{Combining:} $\mathrm{Cov}(\tilde{z}_t, \tilde{z}_s) = \mathrm{Cov}(z_t, z_s) + \gamma + O(\|\delta_\theta\|^3)$. By the reverse triangle inequality and Corollary~\ref{cor:grad_cov}:
\[
|\mathrm{Cov}(\tilde{z}_t, \tilde{z}_s)| \ge \gamma - |\mathrm{Cov}(z_t, z_s)| - O(\|\delta_\theta\|^3) \ge \gamma - 5B^2\,\Pr^{\mathrm{std}}(M_t = 0) - O(\|\delta_\theta\|^3).
\]
The lower bound is positive when $\gamma$ dominates, \ie, when Hessian alignment is strong and the standard-temperature prefix divergence is moderate.
\end{proof}

\begin{remark}
The Hessian alignment condition $\gamma > 0$ is natural for same-family distilled models: since $\delta_\theta$ arises from structured capacity reduction that affects shared feature-extraction layers, the Hessians $H_t$ and $H_s$ project $\delta_\theta$ onto similar directions across different decoding steps, yielding consistently positive alignment. This contrasts sharply with token-level perturbation, where the covariance upper bound in Corollary~\ref{cor:grad_cov} becomes vacuous as $\Pr(M_t = 0) \to 1$ with increasing temperature.
\end{remark}

\begin{remark}[Fixed vs.\ random $\delta_\theta$]
In practice, $\delta_\theta = \theta_{\mathrm{small}} - \theta_{\mathrm{large}}$ is a fixed vector determined by the specific small model. In the theoretical analysis above, we model $\delta_\theta$ as a zero-mean random variable (representing the uncertainty across possible distillation outcomes) in order to derive the expectation-based lower bound. For a fixed $\delta_\theta$, the covariance expansion still holds: the fourth term becomes $\mathrm{Cov}(\mathbf{v}^\top H_t \delta_\theta, \mathbf{v}^\top H_s \delta_\theta \mid q)$, which is the covariance of Hessian projections over trajectory randomness. The lower bound then depends on the specific alignment of this fixed $\delta_\theta$ with the Hessian structure, rather than on the averaged alignment $\gamma$.
\end{remark}

\begin{remark}[Bound on $B$ under softmax parameterization]
Under softmax parameterization, $\log \pi_\theta(a \mid s) = \ell_a - \log \sum_{a'} \exp(\ell_{a'})$, where $\ell_a$ denotes the logit for action $a$. The gradient with respect to the logit parameters is $\nabla_\ell \log \pi_\theta(a \mid s) = e_a - \pi_\theta(\cdot \mid s)$, where $e_a$ is a one-hot vector. Each component has absolute value at most 1, so $\|\nabla_\ell \log \pi_\theta(a \mid s)\|_2 \le \sqrt{|\mathcal{V}|}$ where $|\mathcal{V}|$ is the vocabulary size. Consequently, for the scalar projection $z_{i,t} = \langle \mathbf{u}_{i,t}, \mathbf{v} \rangle$ with $\|\mathbf{v}\| = 1$, one can take $B = \sqrt{|\mathcal{V}|}$ in the logit-parameter setting. For the full model parameters $\theta$ (beyond the last layer), gradient clipping during training provides an effective bound on $B$.
\end{remark}

\subsection{Summary: Token-Level vs.\ Policy-Level Signal Growth}

The key qualitative difference between the two perturbation mechanisms can be summarized as follows:

\begin{center}
\small
\begin{tabular}{lcc}
\toprule
 & Token-level (high temp.) & Policy-level (param.\ perturb.) \\
\midrule
Covariance & Upper bound: $\le 5B^2\Pr^{\mathrm{tok}}(M_t\!=\!0)$ & Lower bound: $\ge \gamma - 5B^2\Pr^{\mathrm{std}}(M_t\!=\!0) - O(\|\delta_\theta\|^3)$ \\
Behavior & $\Pr^{\mathrm{tok}}(M_t\!=\!0) \to 1$ fast $\Rightarrow$ vacuous & $\gamma > 0$ independent of $t$ $\Rightarrow$ positive \\
Signal growth & $\mathbb{E}[\|\sum_t \mathbf{u}_t\|^2] \sim O(L)$ (random walk) & $\mathbb{E}[\|\sum_t \tilde{\mathbf{u}}_t\|^2] \gtrsim O(L^2)$ (constructive) \\
\bottomrule
\end{tabular}
\end{center}

Policy-level perturbation injects a shared, time-invariant signal $\mathbf{v}^\top H_t \delta_\theta$ into each step's score function. This common component induces positive cross-step covariance, causing gradient contributions to reinforce constructively across the horizon rather than cancelling like a random walk. This is the theoretical basis for why smaller models, as structured policy perturbations, provide more informative exploration signals for GRPO training.

\section{Limitations}
Our empirical evaluation is constrained by computational resources, preventing exhaustive coverage of all prominent model families and benchmark categories. In particular, we have not validated S2L-PO on tasks beyond mathematical reasoning that rely on non-verifiable or open-ended rewards. The capability boundary of S2L-PO under broader model scales, task domains, and modalities remains to be explored in future work.
\end{document}